\newcommand{\method}{\textsc{Midas}}
\newcommand{\sedanspot}{\textsc{SedanSpot}}
\newtheorem{definition}{Definition}
\newtheorem{theorem}{Theorem}
\newcommand{\codeurl}{https://github.com/bhatiasiddharth/MIDAS}
\title{\method
: Microcluster-Based Detector of Anomalies in Edge Streams}
\author{Siddharth Bhatia,\textsuperscript{1}
Bryan Hooi,\textsuperscript{1}
Minji Yoon,\textsuperscript{2}
Kijung Shin,\textsuperscript{3}
Christos Faloutsos\textsuperscript{2}\\
\textsuperscript{1}{National University of Singapore},
\textsuperscript{2}{Carnegie Mellon University},
\textsuperscript{3}{KAIST}\\
siddharth@comp.nus.edu.sg, bhooi@comp.nus.edu.sg, minjiy@cs.cmu.edu, kijungs@kaist.ac.kr, christos@cs.cmu.edu
}
\begin{document}

\maketitle
\begin{abstract}
Given a stream of graph edges from a dynamic graph, how can we assign anomaly scores to edges in an online manner, for the purpose of detecting unusual behavior, using constant time and memory? Existing approaches aim to detect \emph{individually surprising} edges. In this work, we propose \method, which focuses on detecting \emph{microcluster anomalies}, or suddenly arriving groups of suspiciously similar edges, such as lockstep behavior, including denial of service attacks in network traffic data. \method\ has the following properties: (a) it detects microcluster anomalies while providing theoretical guarantees about its false positive probability; (b) it is online, thus processing each edge in constant time and constant memory, and also processes the data $162-644$ times faster than state-of-the-art approaches; (c) it provides $42\%$-$48\%$ higher accuracy (in terms of AUC) than state-of-the-art approaches.
\end{abstract}
\section{Introduction}
Anomaly detection in graphs is a critical problem for finding suspicious behavior in innumerable systems, such as intrusion detection, fake ratings, and financial fraud. This has been a well-researched problem with majority of the proposed approaches \cite{akoglu2010oddball,chakrabarti2004autopart,hooi2017graph,jiang2016catching,kleinberg1999authoritative,shin2018patterns} focusing on static graphs. However, many real-world graphs are dynamic in nature, and methods based on static connections may miss temporal characteristics of the graphs and anomalies.

Among the methods focusing on dynamic graphs, most of them have edges aggregated into graph snapshots \cite{eswaran2018spotlight,sun2006beyond,sun2007graphscope,koutra2013deltacon,Sricharan,Gupta}. However, to minimize the effect of malicious activities and start recovery as soon as possible, we need to detect anomalies in real-time or near real-time i.e. to identify whether an incoming edge is anomalous or not, as soon as we receive it. In addition, since the number of vertices can increase as we process the stream of edges, we need an algorithm which uses constant memory in graph size.

Moreover, fraudulent or anomalous events in many applications occur in microclusters or suddenly arriving groups of suspiciously similar edges e.g. denial of service attacks in network traffic data and lockstep behavior. However, existing methods which process edge streams in an online manner, including \cite{eswaran2018sedanspot,ranshous2016scalable}, aim to detect individually surprising edges, not microclusters, and can thus miss large amounts of suspicious activity. 

In this work, we propose \method, which detects \emph{microcluster anomalies}, or suddenly arriving groups of suspiciously similar edges, in edge streams, using constant time and memory. In addition, by using a principled hypothesis testing framework, \method\ provides theoretical bounds on the false positive probability, which these methods do not provide. 

Our main contributions are as follows:
\begin{enumerate}
\item Streaming Microcluster Detection: We propose a novel streaming approach for detecting microcluster anomalies, requiring constant time and memory. 
\item Theoretical Guarantees: In Theorem \ref{thm:bound}, we show guarantees on the false positive probability of \method.
\item Effectiveness: Our experimental results show that \method\ outperforms baseline approaches by $42\%$-$48\%$ accuracy (in terms of AUC), and processes the data $162-644$ times faster than baseline approaches. 
\end{enumerate}
Reproducibility: Our code and datasets are publicly available at \codeurl.

\section{Related Work}
In this section, we review previous approaches to detect anomalous signs on static and dynamic graphs. 
See \cite{akoglu2015graph} for an extensive survey on graph-based anomaly detection.\\
\noindent{\bf Anomaly detection in static graphs} can be classified by which anomalous entities (nodes, edges, subgraph, etc.) are spotted. 
\begin{itemize}
\item Anomalous node detection:
\cite{akoglu2010oddball} extracts egonet-based features and finds empirical patterns with respect to the features. 
Then, it identifies nodes whose egonets deviate from the patterns, including the count of triangles, total weight, and principal eigenvalues.
\cite{jiang2016catching} computes node features, including degree and authoritativeness~\cite{kleinberg1999authoritative}, then spots nodes whose neighbors are notably close in the feature space.
\item Anomalous subgraph detection:
\cite{hooi2017graph} and \cite{shin2018patterns} measure the anomalousness of nodes and edges, detecting a dense subgraph consisting of many anomalous nodes and edges. 
\item Anomalous edge detection:
\cite{chakrabarti2004autopart} encodes an input graph based on similar connectivity among nodes, then spots edges whose removal reduces the total encoding cost significantly.
\cite{tong2011non} factorize the adjacency matrix and flag edges with high reconstruction error as outliers. 
\end{itemize}
\noindent{\bf Anomaly detection in graph streams} use as input a series of graph snapshots over time. We categorize them similarly according to the type of anomaly detected:
\begin{itemize}{
\item Anomalous node detection:
\cite{sun2006beyond} approximates the adjacency matrix of the current snapshot based on incremental matrix factorization, then spots nodes corresponding to rows with high reconstruction error.
\item Anomalous subgraph detection:
Given a graph with timestamps on edges, \cite{beutel2013copycatch} spots near-bipartite cores where each node is connected to others in the same core densly within a short time. 
\cite{jiang2016catching} detects groups of nodes who form dense subgraphs in a temporally synchronized manner.
\item Anomalous event detection:
\cite{eswaran2018spotlight} detects sudden appearance of many unexpected edges, and \cite{yoon2019fast} spots sudden changes in 1st and 2nd derivatives of PageRank.
}\end{itemize}
\noindent{\bf Anomaly detection in edge streams} use as input a stream of edges over time. Categorizing them according to the type of anomaly detected:
\begin{itemize}
\item Anomalous node detection:
Given an edge stream, \cite{yu2013anomalous} detects nodes whose egonets suddenly and significantly change.
\item Anomalous subgraph detection:
Given an edge stream, \cite{shin2017densealert} identifies dense subtensors created within a short time.
\item Anomalous edge detection:
\cite{ranshous2016scalable} focuses on sparsely-connected parts of a graph, while \cite{eswaran2018sedanspot} identifies edge anomalies based on edge occurrence, preferential attachment, and mutual neighbors.
\end{itemize}
\noindent Only the 2 methods in the last category are applicable to our task, as they operate on edge streams and output a score per edge. However, as shown in Table \ref{tab:comparison}, neither method aims to detect microclusters, or provides guarantees on false positive probability. 

\begin{table}[!ht]
\centering
\caption{Comparison of relevant edge stream anomaly detection approaches.}
\label{tab:comparison}
\begin{tabular}{@{}rcc|c@{}}
\toprule
 & \rotatebox{90}{\sedanspot~\shortcite{eswaran2018sedanspot}} 
 & \rotatebox{90}{RHSS~\shortcite{ranshous2016scalable}}
 & {\bf \rotatebox{90}{\method}} \\ \midrule
\textbf{Microcluster Detection} & & & \CheckmarkBold \\
\textbf{Guarantee on False Positive Probability} & & & \CheckmarkBold \\
\textbf{Constant Memory} & \Checkmark & \Checkmark & \CheckmarkBold \\
\textbf{Constant Update Time} & \Checkmark & \Checkmark & \CheckmarkBold \\
\bottomrule
\end{tabular}
\end{table}

\section{Problem}

Let $\mathcal{E} = \{e_1, e_2, \cdots\}$ be a stream of edges from a time-evolving graph $\mathcal{G}$. Each arriving edge is a tuple $e_i = (u_i, v_i, t_i)$ consisting of a source node $u_i \in \mathcal{V}$, a destination node $v_i \in \mathcal{V}$, and a time of occurrence $t_i$, which is the time at which the edge was added to the graph. For example, in a network traffic stream, an edge $e_i$ could represent a connection made from a source IP address $u_i$ to a destination IP address $v_i$ at time $t_i$. We do not assume that the set of vertices $\mathcal{V}$ is known a priori: for example, new IP addresses or user IDs may be created over the course of the stream.

We model $\mathcal{G}$ as a directed graph. Undirected graphs can simply be handled by treating an incoming undirected $e_i = (u_i, v_i, t_i)$ as two simultaneous directed edges, one in either direction.

We also allow $\mathcal{G}$ to be a multigraph: edges can be created multiple times between the same pair of nodes. Edges are allowed to arrive simultaneously: i.e. $t_{i+1} \ge t_i$, since in many applications $t_i$ are given in the form of discrete time ticks. 

The desired properties of our algorithm are as follows:

\begin{itemize}
\item {\bf Microcluster Detection:} It should detect suddenly appearing bursts of activity which share many repeated nodes or edges, which we refer to as microclusters.
\item {\bf Guarantees on False Positive Probability:} Given any user-specified probability level $\epsilon$ (e.g. $1\%$), the algorithm should be adjustable so as to provide false positive probability of at most $\epsilon$ (e.g. by adjusting a threshold that depends on $\epsilon$). Moreover, while guarantees on the false positive probability rely on assumptions about the data distribution, we aim to make our assumptions as weak as possible.
\item {\bf Constant Memory and Update Time:} For scalability in the streaming setting, the algorithm should run in constant memory and constant update time per newly arriving edge. Thus, its memory usage and update time should not grow with the length of the stream, or the number of nodes in the graph. 
\end{itemize}

\section{Proposed Algorithm}
\subsection{Overview}

Next, we describe our \method\ and \method-R approaches. The following provides an overview:

\begin{enumerate}
\item {\bf Streaming Hypothesis Testing Approach:} We describe our \method\ algorithm, which uses streaming data structures within a hypothesis testing-based framework, allowing us to obtain guarantees on false positive probability.
\item {\bf Detection and Guarantees:} We describe our decision procedure for determining whether a point is anomalous, and our guarantees on false positive probability.
\item {\bf Incorporating Relations:} We extend our approach to the \method-R algorithm, which incorporates relationships between edges temporally and spatially\footnote{We use `spatially' in a graph sense, i.e. connecting nearby nodes, not to refer to any other continuous spatial dimension.}.
\end{enumerate}

\subsection{\method: Streaming Hypothesis Testing Approach}

\begin{figure}[!htb]
        \center{\includegraphics[width=\columnwidth]
        {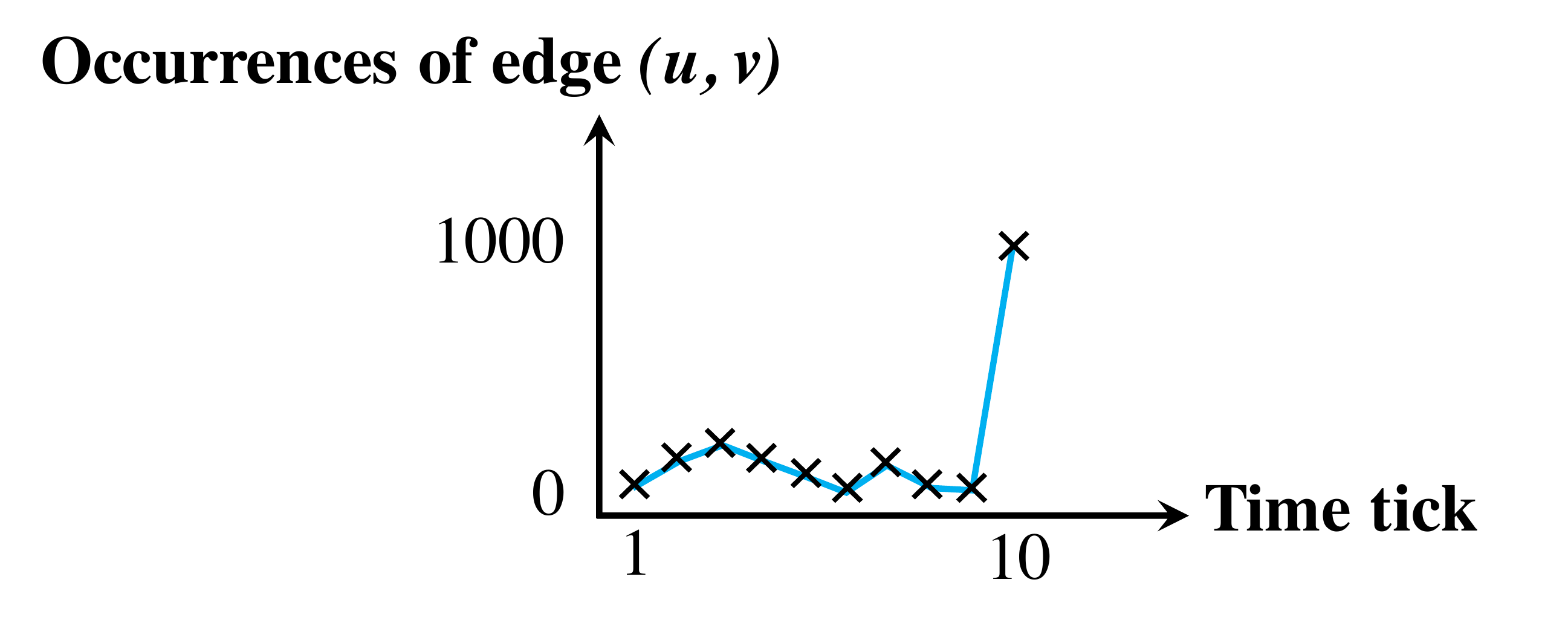}}
        \caption{\label{fig:intro} Time series of a single source-destination pair $(u,v)$, with a large burst of activity at time tick $10$.}
\end{figure}

Consider the example in Figure \ref{fig:intro} of a single source-destination pair $(u,v)$, which shows a large burst of activity at time $10$. This burst is the simplest example of a microcluster, as it consists of a large group of edges which are very similar to one another (in fact identical), both {\bf spatially} (i.e. in terms of the nodes they connect) and {\bf temporally}. 

\subsubsection{Streaming Data Structures}

In an offline setting, there are many time-series methods which could detect such bursts of activity. However, in an online setting, recall that we want memory usage to be bounded, so we cannot keep track of even a single such time series. Moreover, there are many such source-destination pairs, and the set of sources and destinations is not fixed a priori. 

To circumvent these problems, we maintain two types of Count-Min-Sketch (CMS)~\cite{cormode2005improved} data structures. Assume we are at a particular fixed time tick $t$ in the stream; we treat time as a discrete variable for simplicity. Let $s_{uv}$ be the total number of edges from $u$ to $v$ up to the current time. Then, we use a single CMS data structure to approximately maintain all such counts $s_{uv}$ (for all edges $uv$) in constant memory: at any time, we can query the data structure to obtain an approximate count $\hat{s_{uv}}$. 

Secondly, let $a_{uv}$ be the number of edges from $u$ to $v$ in the current time tick (but not including past time ticks). We keep track of $a_{uv}$ using a similar CMS data structure, the only difference being that we reset this CMS data structure every time we transition to the next time tick. Hence, this CMS data structure provides approximate counts $\hat{a_{uv}}$ for the number of edges from $u$ to $v$ in the current time tick $t$. 

\subsubsection{Hypothesis Testing Framework}

Given approximate counts $\hat{s_{uv}}$ and $\hat{a_{uv}}$, how can we detect microclusters? Moreover, how can we do this in a principled framework that allows for theoretical guarantees? 

Fix a particular source and destination pair of nodes, $(u,v)$, as in Figure \ref{fig:intro}. One approach would be to assume that the time series in Figure \ref{fig:intro} follows a particular generative model: for example, a Gaussian distribution. We could then find the mean and standard deviation of this Gaussian distribution. Then, at time $t$, we could compute the Gaussian likelihood of the number of edge occurrences in the current time tick, and declare an anomaly if this likelihood is below a specified threshold. 

However, this requires a restrictive Gaussian assumption, which can lead to excessive false positives or negatives if the data follows a very different distribution. Instead, we use a weaker assumption: that the mean level (i.e. the average rate at which edges appear) in the current time tick (e.g. $t=10$) is the same as the mean level before the current time tick $(t<10)$. Note that this avoids assuming any particular distribution for each time tick, and also avoids a strict assumption of stationarity over time.

Hence, we can divide the past edges into two classes: the current time tick $(t=10)$ and all past time ticks $(t<10)$. Recalling our previous notation, the number of events at $(t=10)$ is $a_{uv}$, while the number of edges in past time ticks $(t<10)$ is $s_{uv} - a_{uv}$. 

Under the chi-squared goodness-of-fit test, the chi-squared statistic is defined as the sum over categories of $\frac{(\text{observed} - \text{expected})^2}{\text{expected}}$. In this case, our categories are $t=10$ and $t<10$. Under our mean level assumption, since we have $s_{uv}$ total edges (for this source-destination pair), the expected number at $t=10$ is $\frac{s_{uv}}{t}$, and the expected number for $t<10$ is the remaining, i.e. $\frac{t-1}{t} s_{uv}$. Thus the chi-squared statistic is:

\begin{align*}
X^2 &= \frac{(\text{observed}_{(t=10)} - \text{expected}_{(t=10)})^2}{\text{expected}_{(t=10)}} \\
&+ \frac{(\text{observed}_{(t<10)} - \text{expected}_{(t<10)})^2}{\text{expected}_{(t<10)}}\\
&= \frac{(a_{uv} - \frac{s_{uv}}{t})^2}{\frac{s_{uv}}{t}} + \frac{((s_{uv} - a_{uv}) - \frac{t-1}{t} s_{uv})^2}{\frac{t-1}{t} s_{uv}}\\
&= \frac{(a_{uv} - \frac{s_{uv}}{t})^2}{\frac{s_{uv}}{t}} + \frac{(a_{uv} - \frac{s_{uv}}{t})^2}{\frac{t-1}{t} s_{uv}}\\
&= (a_{uv} - \frac{s_{uv}}{t})^2 \frac{t^2}{s_{uv}(t-1)}
\end{align*}
Note that both $a_{uv}$ and $s_{uv}$ can be estimated by our CMS data structures, obtaining approximations $\hat{a_{uv}}$ and $\hat{s_{uv}}$ respectively. This leads to our following anomaly score, using which we can evaluate a newly arriving edge with source-destination pair $(u,v)$:

\begin{definition}[Anomaly Score]
Given a newly arriving edge $(u,v,t)$, our anomaly score is computed as:
\begin{align}
\text{score}((u,v,t)) = (\hat{a_{uv}} - \frac{\hat{s_{uv}}}{t})^2 \frac{t^2}{\hat{s_{uv}}(t-1)}
\end{align}
\end{definition}

Algorithm \ref{alg:midas} summarizes our \method\ algorithm.

\begin{algorithm}
	\caption{\method:\ Streaming Anomaly Scoring \label{alg:midas}}
	\KwIn{Stream of graph edges over time}
	\KwOut{Anomaly scores per edge}
	{\bf $\triangleright$ Initialize CMS data structures:} \\
	Initialize CMS for total count $s_{uv}$ and current count $a_{uv}$ \\
	\While{new edge $e=(u,v,t)$ is received:}{
	{\bf $\triangleright$ Update Counts:} \\
	Update CMS data structures for the new edge $uv$\\
	{\bf $\triangleright$ Query Counts:} \\
	Retrieve updated counts $\hat{s_{uv}}$ and $\hat{a_{uv}}$\\
	{\bf $\triangleright$ Anomaly Score:}\\
	{\bf output} $\text{score}((u,v,t)) = (\hat{a_{uv}} - \frac{\hat{s_{uv}}}{t})^2 \frac{t^2}{\hat{s_{uv}}(t-1)}$\\
	}
\end{algorithm}

\subsection{Detection and Guarantees}
While Algorithm \ref{alg:midas} computes an anomaly score for each edge, it does not provide a binary decision for whether an edge is anomalous or not. We want a decision procedure that provides binary decisions and a guarantee on the false positive probability: i.e. given a user-defined threshold $\epsilon$, the probability of a false positive should be at most $\epsilon$. Intuitively, the key idea is to combine the approximation guarantees of CMS data structures with properties of a chi-squared random variable.

The key property of CMS data structures we use is that given any $\epsilon$ and $\nu$, for appropriately chosen CMS data structure sizes, with probability at least $1-\frac{\epsilon}{2}$, the estimates $\hat{a_{uv}}$ satisfy:
\begin{align}
\hat{a_{uv}} \le a_{uv} + \nu \cdot N_t
\end{align}
where $N_t$ is the total number of edges at time $t$. Since CMS data structures can only overestimate the true counts, we additionally have 
\begin{align}
s_{uv} \le \hat{s_{uv}}
\end{align}
Define an adjusted version of our earlier score:
\begin{align}
\tilde{a_{uv}} = \hat{a_{uv}} - \nu N_t
\end{align}
To obtain its probabilistic guarantee, our decision procedure computes $\tilde{a_{uv}}$, and uses it to compute an adjusted version of our earlier statistic: 
\begin{align}
\tilde{X^2} = (\tilde{a_{uv}} - \frac{\hat{s_{uv}}}{t})^2 \frac{t^2}{\hat{s_{uv}}(t-1)}
\end{align}
Then our main guarantee is as follows:
\begin{theorem}[False Positive Probability Bound] \label{thm:bound}
Let $\chi_{1-\epsilon/2}^2(1)$ be the $1-\epsilon/2$ quantile of a chi-squared random variable with 1 degree of freedom. Then:
\begin{align}
P(\tilde{X^2} > \chi_{1-\epsilon/2}^2(1)) < \epsilon
\end{align}
In other words, using $\tilde{X^2}$ as our test statistic and threshold $\chi_{1-\epsilon/2}^2(1)$ results in a false positive probability of at most $\epsilon$. 
\end{theorem}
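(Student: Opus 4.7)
The plan is to combine the CMS accuracy guarantee with the classical asymptotic $\chi^2$ distribution of a two-cell Pearson goodness-of-fit statistic, and to glue the two pieces together with a union bound. Concretely, I would produce a ``conservative'' statistic $\tilde{X^2}$ that, on the high-probability CMS event, is dominated by the ``ideal'' statistic $X^2$ computed from the true counts $a_{uv}$ and $s_{uv}$; then the $\chi^2$ quantile controls the tail of $X^2$, and the CMS failure probability pays for the rest.

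First I would fix the user-chosen $\epsilon$ and choose the CMS width and depth so that the event
\begin{align*}
A = \{\hat{a_{uv}} \le a_{uv} + \nu N_t\}
\end{align*}
satisfies $P(A) \ge 1 - \epsilon/2$; this is exactly the version of the Count-Min-Sketch tail bound quoted in the excerpt. Because CMS never underestimates, we always have $s_{uv} \le \hat{s_{uv}}$, and on $A$ the adjusted count satisfies $\tilde{a_{uv}} = \hat{a_{uv}} - \nu N_t \le a_{uv}$. The next step is to argue that on $A$,
\begin{align*}
\tilde{X^2} = \bigl(\tilde{a_{uv}} - \tfrac{\hat{s_{uv}}}{t}\bigr)^2 \frac{t^2}{\hat{s_{uv}}(t-1)} \le \bigl(a_{uv} - \tfrac{s_{uv}}{t}\bigr)^2 \frac{t^2}{s_{uv}(t-1)} = X^2.
\end{align*}
The denominator inequality is immediate from $\hat{s_{uv}} \ge s_{uv}$, and in the microcluster direction the numerator inequality follows from the signed chain $0 \le \tilde{a_{uv}} - \hat{s_{uv}}/t \le a_{uv} - s_{uv}/t$.

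For the distributional step I would appeal to the standard Pearson result: under the paper's null hypothesis that the per-tick arrival rate matches the historical rate, the counts in the two cells $\{t\} $ and $\{<t\}$ form a two-cell multinomial with expected counts $s_{uv}/t$ and $(t-1)s_{uv}/t$, so $X^2 \Rightarrow \chi^2(1)$ and therefore $P(X^2 > \chi^2_{1-\epsilon/2}(1)) = \epsilon/2$. Chaining with a union bound finishes the argument:
\begin{align*}
P\bigl(\tilde{X^2} > \chi^2_{1-\epsilon/2}(1)\bigr) &\le P(A^c) + P\bigl(\tilde{X^2} > \chi^2_{1-\epsilon/2}(1),\, A\bigr) \\
&\le \tfrac{\epsilon}{2} + P\bigl(X^2 > \chi^2_{1-\epsilon/2}(1)\bigr) = \epsilon.
\end{align*}

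The delicate piece, and the only one that is not bookkeeping, is the stochastic domination $\tilde{X^2} \le X^2$ on $A$. The denominator contraction is painless, but the numerator involves a squared difference, and squaring is not monotone in the underlying signed quantity: if $\tilde{a_{uv}}$ is pushed below $\hat{s_{uv}}/t$ by the CMS adjustment, the magnitude of the residual could in principle grow. I expect this is handled by a short case split. In the microcluster direction $\tilde{a_{uv}} \ge \hat{s_{uv}}/t$, the signed chain above gives the squared inequality directly. In the opposite direction, one argues that a large value of $\tilde{X^2}$ together with $\tilde{a_{uv}} \le a_{uv}$ and $\hat{s_{uv}} \ge s_{uv}$ still forces $X^2$ to exceed the same threshold, so the tail probability is still controlled by the $\chi^2(1)$ upper quantile. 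Getting this case analysis tight is the main obstacle; everything else is routine algebra and a single union bound.
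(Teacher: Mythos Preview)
Your proposal is exactly the paper's argument: split $\epsilon$ in half between the CMS tail bound and the $\chi^2(1)$ quantile, argue that $\tilde{X^2} \le X^2$ on the intersection of the two good events, and conclude via a union bound. The monotonicity step you flag as delicate is written in the paper as a single-line inequality $(\hat{a_{uv}} - \nu N_t - \hat{s_{uv}}/t)^2 \frac{t^2}{\hat{s_{uv}}(t-1)} \le (a_{uv} - \frac{s_{uv}}{t})^2 \frac{t^2}{s_{uv}(t-1)}$ with no case split, so you are already being more careful about the squared-residual comparison than the published proof.
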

\begin{proof}
Recall that 
\begin{align}
X^2 = (a_{uv} - \frac{s_{uv}}{t})^2 \frac{t^2}{s_{uv}(t-1)}
\end{align}
was defined so that it has a chi-squared distribution. Thus:
\begin{align} \label{eq:cond1}
P(X^2 \le \chi_{1-\epsilon/2}^2(1)) = 1-\epsilon/2
\end{align}
At the same time, by the CMS guarantees we have:
\begin{align} \label{eq:cond2}
P(\hat{a_{uv}} \le a_{uv} + \nu \cdot N_t) \ge 1-\epsilon/2
\end{align}

By union bound, with probability at least $1-\epsilon$, both these events \eqref{eq:cond1} and \eqref{eq:cond2} hold, in which case:
\begin{align*}
\tilde{X^2} &= (\tilde{a_{uv}} - \frac{\hat{s_{uv}}}{t})^2 \frac{t^2}{\hat{s_{uv}}(t-1)}\\
& = (\hat{a_{uv}} - \nu \cdot N_t - \frac{\hat{s_{uv}}}{t})^2 \frac{t^2}{\hat{s_{uv}}(t-1)}\\
& \le (a_{uv} - \frac{s_{uv}}{t})^2 \frac{t^2}{s_{uv}(t-1)}\\
& = X^2 \le \chi_{1-\epsilon/2}^2(1)
\end{align*}
Finally, we conclude that
\begin{align}
P(\tilde{X^2} > \chi_{1-\epsilon/2}^2(1)) < \epsilon.
\end{align}
\end{proof}
\subsection{Incorporating Relations}

In this section, we describe our \method-R approach, which considers edges in a {\bf relational} manner: that is, it aims to group together edges which are nearby, either temporally or spatially.

\paragraph{Temporal Relations} Rather than just counting edges in the same time tick (as we do in \method), we want to allow for some temporal flexibility: i.e. edges in the recent past should also count toward the current time tick, but modified by a reduced weight. A simple and efficient way to do this using our CMS data structures is as follows: at the end of every time tick, rather than resetting our CMS data structures $a_{uv}$, we reduce all its counts by a fixed fraction $\alpha \in (0, 1)$. This allows past edges to count toward the current time tick, with a diminishing weight.

\paragraph{Spatial Relations} We would like to catch large groups of spatially nearby edges: e.g. a single source IP address suddenly creating a large number of edges to many destinations, or a small group of nodes suddenly creating an abnormally large number of edges between them. A simple intuition we use is that in either of these two cases, we expect to observe {\bf nodes} with a sudden appearance of a large number of edges. Hence, we can use CMS data structures to keep track of edge counts like before, except counting all edges adjacent to any node $u$. Specifically, we create CMS counters $\hat{a_u}$ and $\hat{s_u}$ to approximate the current and total edge counts adjacent to node $u$. Given each incoming edge $(u,v)$, we can then compute three anomalousness scores: one for edge $(u,v)$, as in our previous algorithm; one for node $u$, and one for node $v$. Finally, we combine the three scores by taking their maximum value. Another possibility of aggregating the three scores is to take their sum. Algorithm \ref{alg:midasr} summarizes the resulting \method-R algorithm.

\begin{algorithm}
	\caption{\method-R:\ Incorporating Relations \label{alg:midasr}}
	\KwIn{Stream of graph edges over time}
	\KwOut{Anomaly scores per edge}
	{\bf $\triangleright$ Initialize CMS data structures:} \\
	Initialize CMS for total count $s_{uv}$ and current count $a_{uv}$ \\
	Initialize CMS for total count $s_{u}$ and current count $a_{u}$ \\
	\While{new edge $e=(u,v,t)$ is received:}{
	{\bf $\triangleright$ Update Counts:} \\
	Update CMS data structures for the new edge $uv$\\
	{\bf $\triangleright$ Query Counts:} \\
	Retrieve updated counts $\hat{s_{uv}}$ and $\hat{a_{uv}}$\\
	Retrieve updated counts $\hat{s_u},\hat{s_v},\hat{a_{u}},\hat{a_{v}}$\\
	{\bf $\triangleright$ Compute Edge Scores:}\\
	 $\text{score}(u,v,t) = (\hat{a_{uv}} - \frac{\hat{s_{uv}}}{t})^2 \frac{t^2}{\hat{s_{uv}}(t-1)}$\\
	{\bf $\triangleright$ Compute Node Scores:}\\
	 $\text{score}(u,t) = (\hat{a_{u}} - \frac{\hat{s_{u}}}{t})^2 \frac{t^2}{\hat{s_{u}}(t-1)}$\\
	$\text{score}(v,t) = (\hat{a_{v}} - \frac{\hat{s_{v}}}{t})^2 \frac{t^2}{\hat{s_{v}}(t-1)}$\\
	{\bf $\triangleright$ Final Node Scores:}\\ 
	$\textbf{output} \max\{ \text{score}(u,v,t), \text{score}(u,t), \text{score}(v,t) \}$
	}
\end{algorithm}

\subsection{Time and Memory Complexity}

In terms of memory, both \method\ and \method-R only need to maintain the CMS data structures over time, which are proportional to $O(wb)$, where $w$ and $b$ are the number of hash functions and the number of buckets in the CMS data structures; which is bounded with respect to the data size. 

For time complexity, the only relevant steps in Algorithm \ref{alg:midas} and \ref{alg:midasr} are those that either update or query the CMS data structures, which take $O(w)$ (all other operations run in constant time). Thus, time complexity per update step is $O(w)$. 
\section{Experiments}

In this section, we evaluate the performance of \method\ and \method-R compared to \sedanspot\ on dynamic graphs. We aim to answer the following questions:

\begin{enumerate}[label=\textbf{Q\arabic*.}]
\item {\bf Accuracy:} How accurately does \method\ detect real-world anomalies compared to baselines, as evaluated using the ground truth labels?
\item {\bf Scalability:} How does it scale with input stream length? How does the time needed to process each input compare to baseline approaches?
\item {\bf Real-World Effectiveness:} Does it detect meaningful anomalies in case studies on \emph{Twitter} graphs?
\end{enumerate}

\paragraph{Datasets:}
\emph{DARPA} \cite{lippmann1999results} has $4.5M$ IP-IP communications between $9.4K$ source IP and $23.3K$ destination IP over $87.7K$ minutes. Each communication is a directed edge (srcIP, dstIP, timestamp, attack) where the ground truth attack label indicates whether the communication is an attack or not (anomalies are $60.1\%$ of total).

\emph{TwitterSecurity} \cite{rayana2015less,rayana2016less} has $2.6M$ tweet samples for four months (May-Aug $2014$) containing Department of Homeland Security keywords related to terrorism or domestic security. Entity-entity co-mention temporal graphs are built on daily basis ($80$ time ticks). 

\emph{TwitterWorldCup} \cite{rayana2015less,rayana2016less} has $1.7M$ tweet samples for the World Cup $2014$ season (June $12$-July $13$). The tweets are filtered by popular/official World Cup hashtags, such as \#worldcup, \#fifa, \#brazil, etc.  Similar to TwitterSecurity, entity-entity co-mention temporal graphs are constructed on $5$ minute sample rate ($8640$ time points). 

\paragraph{Baseline:}
As described in our Related Work, only RHSS and \sedanspot\ operate on edge streams and provide a score for each edge. 
\sedanspot\ uses personalised PageRank to detect anomalies in sublinear space and constant time per edge. However, RHSS was evaluated in \cite{eswaran2018sedanspot} on the DARPA dataset and found to have AUC of $0.17$ (lower than chance). Hence, we only compare with \sedanspot.

\paragraph{Evaluation Metrics:}
All the methods output an anomaly score per edge (higher is more anomalous). We calculate the True Positive Rate (TPR) and False Positive Rate (FPR) and plot the ROC curve (TPR vs FPR). We also report the Area under the ROC curve (AUC) and Average Precision Score.

\subsection{Experimental Setup}
All experiments are carried out on a $2.4 GHz$ Intel Core $i9$ processor, $32 GB$ RAM, running OS $X$ $10.15.2$. We implement \method\ and \method-R in C\texttt{++}. We use $2$ hash functions for the CMS data structures, and we set the number of CMS buckets to $2719$ to result in an approximation error of $\nu=0.001$. For \method-R, we set the temporal decay factor $\alpha$ as $0.5$. We used an open-sourced implementation of \sedanspot, provided by the authors, following parameter settings as suggested in the original paper (sample size $500$). 

\subsection{Q1. Accuracy}
Figure \ref{fig:ROC} plots the ROC curve for \method-R, \method\ and \sedanspot\ on the \emph{DARPA} dataset. Figure \ref{fig:AUC}(top) plots accuracy (AUC) vs. running time (log scale, in seconds, excluding I/O). We see that \method\ achieves a much higher accuracy $(=0.91)$ compared to the baseline $(=0.64)$, while also running significantly faster $(0.13s$ vs. $84s)$. This is a $42\%$ accuracy improvement at $644\times$ faster speed. \method-R achieves the highest accuracy $(=0.95)$ which is $48\%$ accuracy improvement compared to the baseline at $215\times$ faster speed.

Figure \ref{fig:AUC}(bottom) plots the average precision score vs. running time. We see that \method\ is more precise $(=0.95)$ compared to the baseline $(=0.75)$. This is a $27\%$ precision improvement. \method-R achieves the highest average precision score $(=0.97)$ which is $29\%$ more precise than \sedanspot.

We see that \method\ and \method-R greatly outperform \sedanspot\ on both accuracy and precision metrics.
\begin{figure}[!htb]
        \center{\includegraphics[width=\columnwidth]
        {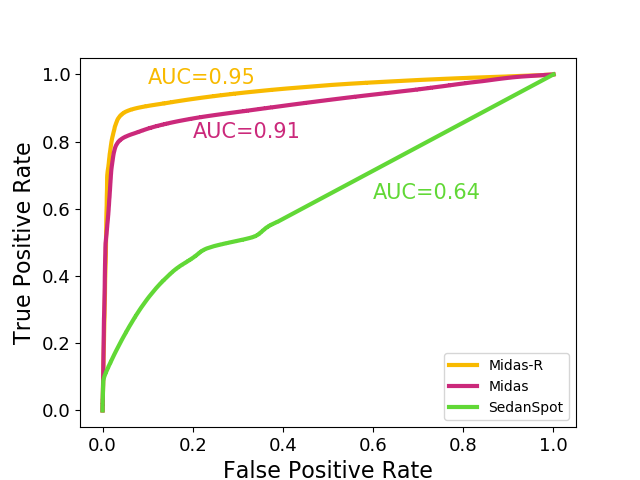}}
        \caption{\label{fig:ROC} ROC for \emph{DARPA} dataset}
\end{figure}

\begin{figure}[!htb]
        \center{\includegraphics[width=\columnwidth]
        {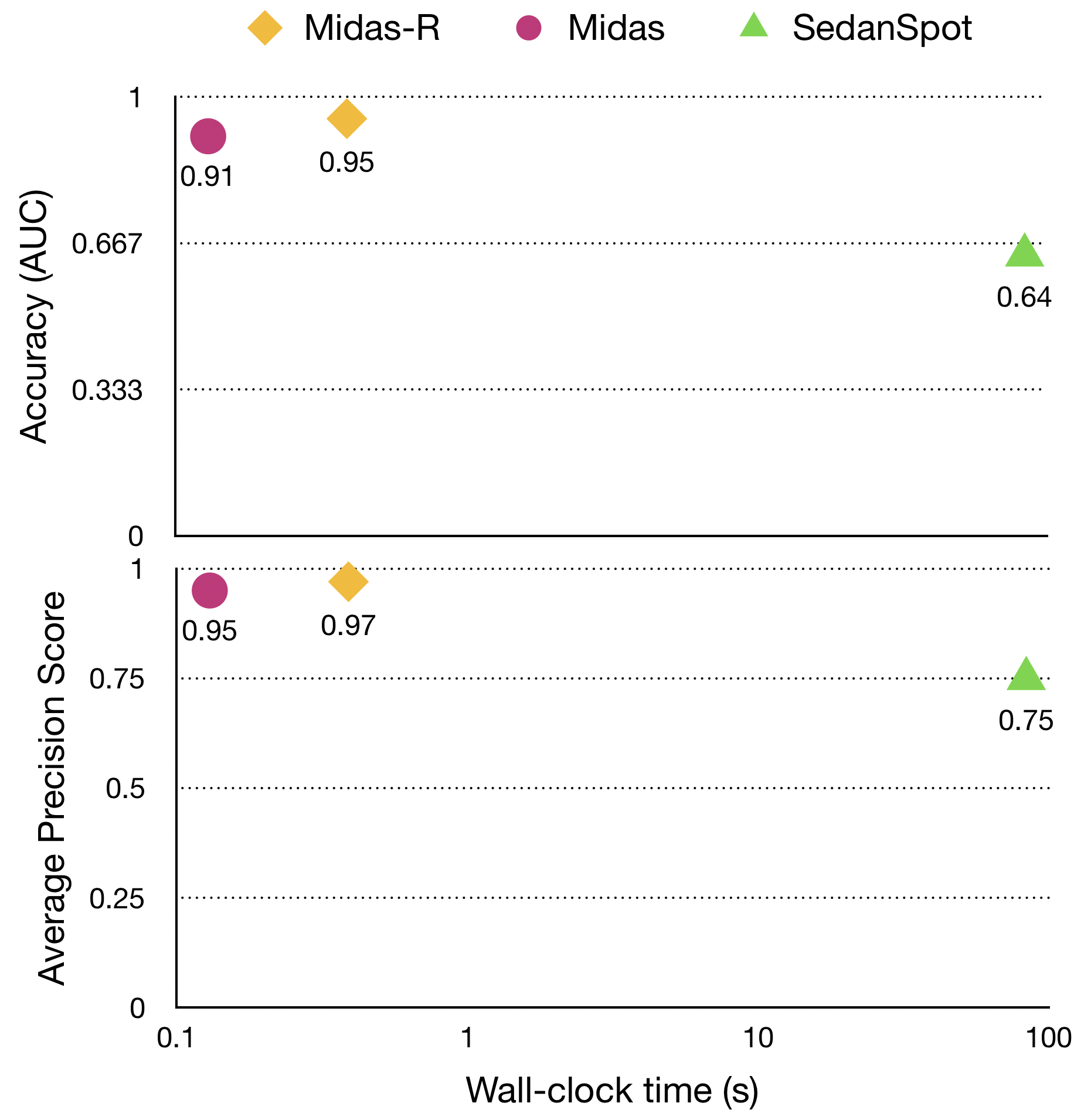}}
        \caption{\label{fig:AUC} (top) Accuracy (AUC) vs time, (bottom) Average Precision Score vs time}
\end{figure}

\subsection{Q2. Scalability}
Figure \ref{fig:scaling} shows the scalability of \method\ and \method-R. We plot the wall-clock time needed to run on the (chronologically) first $2^{12}, 2^{13},2^{14},...,2^{22}$ edges of the \emph{DARPA} dataset. This confirms the linear scalability of \method\ and \method-R with respect to the number of edges in the input dynamic graph due to its constant processing time per edge. Note that both \method\ and \method-R process $4M$ edges within $0.5$ second, allowing real-time anomaly detection. 

Figure \ref{fig:frequency} plots the number of edges (in millions) and time to process each edge for \emph{DARPA} dataset. \method\ processes $4.4M$ edges within $1\mu$s each and $0.15M$ edges within $2\mu$s each. \method-R processes $4.3M$ edges within $1\mu$s each and $0.23M$ edges within $2\mu$s each.

Table \ref{tab:times} shows the time it takes \sedanspot, \method\ and \method-R to run on the \emph{TwitterWorldCup}, \emph{TwitterSecurity} and \emph{DARPA} datasets. For \emph{TwitterWorldCup} dataset, we see that \method-R is $162\times$ faster than \sedanspot\ $(0.17s$ vs. $27.58s)$ and \method\ is $460\times$ faster than \sedanspot $(0.06s$ vs $27.58s)$. For \emph{TwitterSecurity} dataset, we see that \method-R is $177\times$ faster than \sedanspot\ $(0.23s$ vs. $40.71s)$ and \method\ is $509\times$ faster than \sedanspot $(0.08s$ vs $40.71s)$. For the \emph{DARPA} dataset, we see that \method-R is $215\times$ faster than \sedanspot\ $(0.39s$ vs. $83.66s)$ and \method\ is $644\times$ faster than \sedanspot $(0.13s$ vs $83.66s)$.

\sedanspot\ requires several subprocesses (hashing, random-walking, reordering, sampling, etc), resulting in the large computation time. \method\ and \method-R are both scalable and fast.

\begin{figure}[!htb]
        \center{\includegraphics[width=\columnwidth]
        {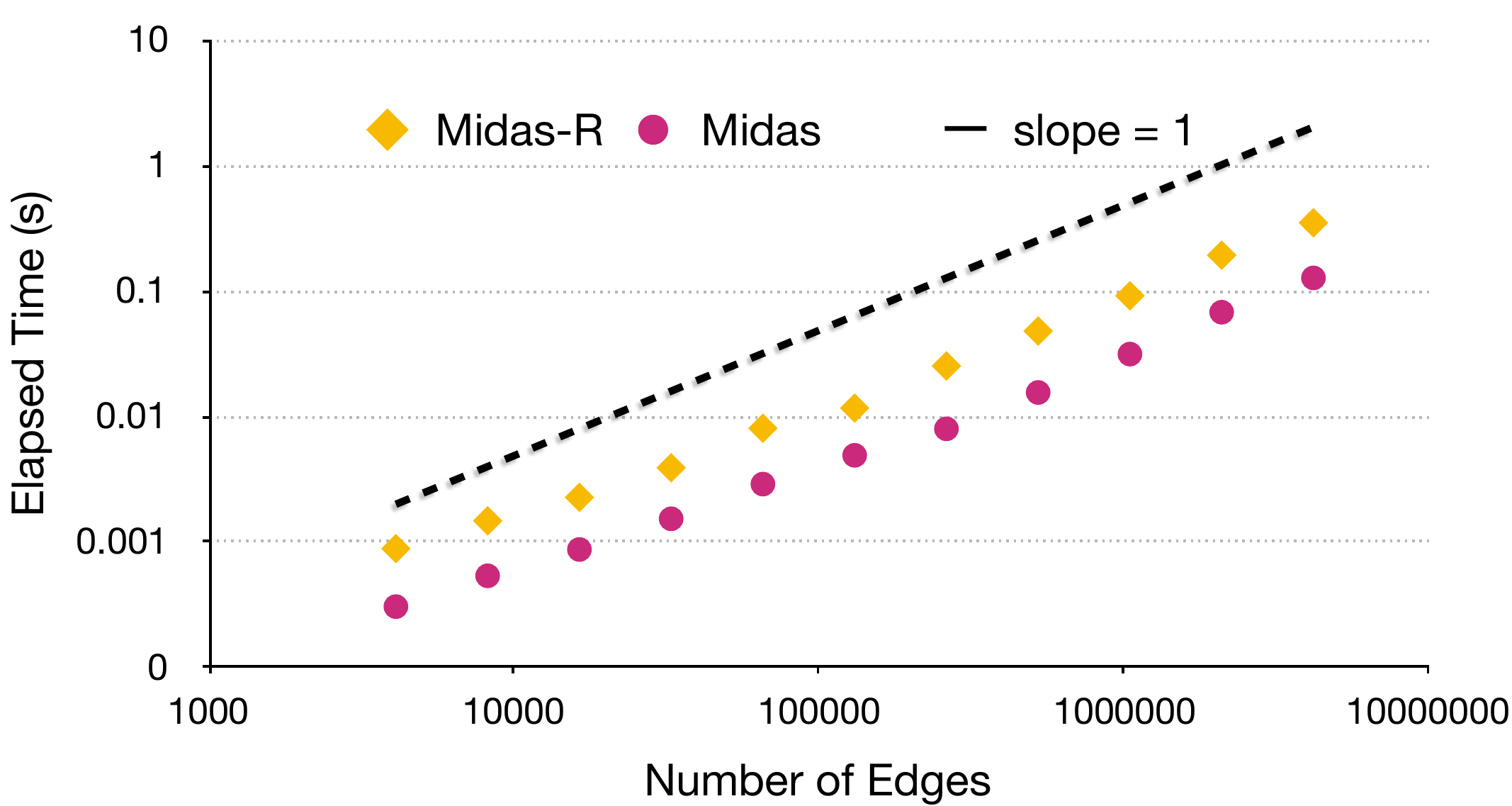}}
        \caption{\label{fig:scaling} \method\ and \method-R scale linearly with the number of edges in the input dynamic graph.}
\end{figure}

\begin{figure}[!htb]
        \center{\includegraphics[width=\columnwidth]
        {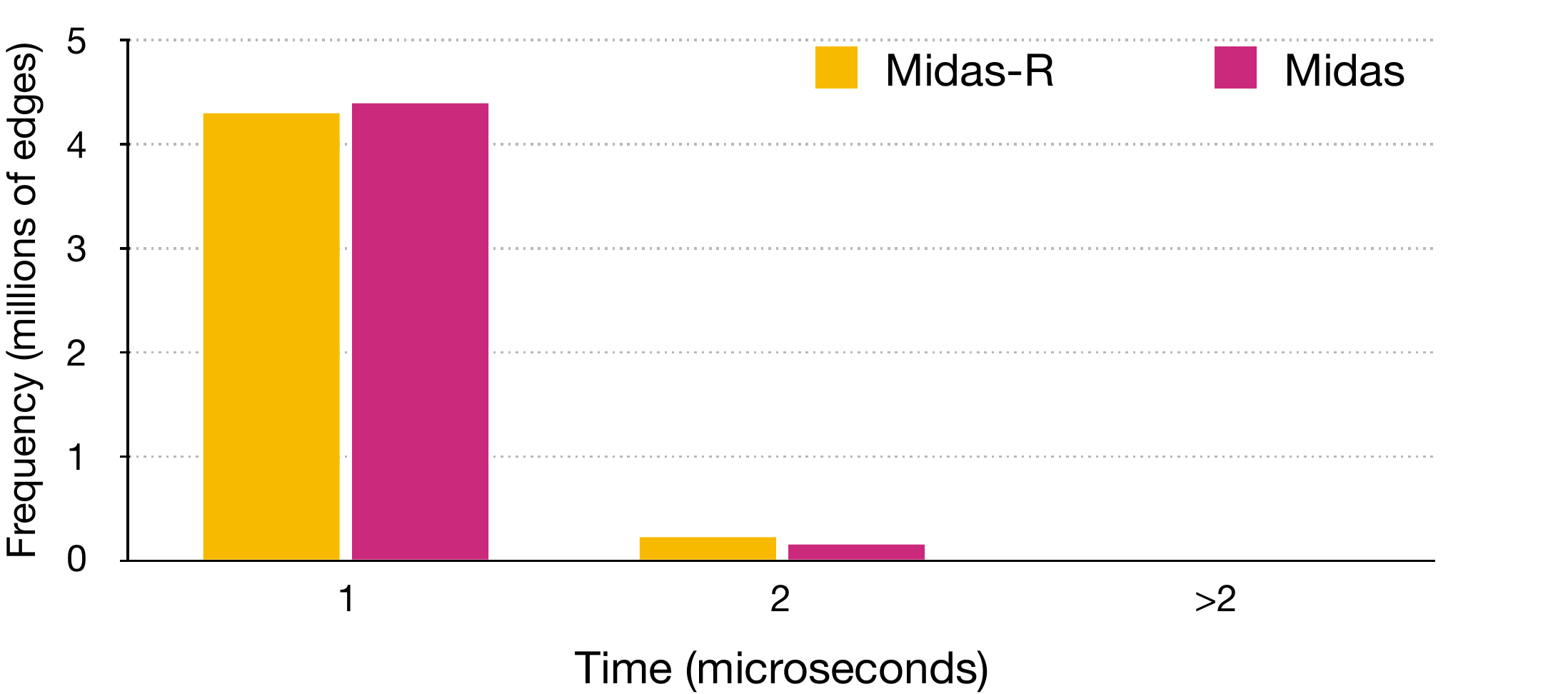}}
        \caption{\label{fig:frequency} Distribution of processing times for $\sim4.5M$ edges of \emph{DARPA} dataset.}
\end{figure}

\begin{table}[!ht]
\centering
\caption{Running time for different datasets in seconds}
\label{tab:times}
\begin{tabular}{@{}rccc@{}}
\toprule
 & \sedanspot
 & \method
 & \method-R \\ \midrule
\textbf{TwitterWorldCup} & $27.58$s & $0.06$s  & $0.17$s \\
\textbf{TwitterSecurity} & $40.71$s & $0.08$s & $0.23$s \\
\textbf{DARPA} & $83.66$s & $0.13$s & $0.39$s \\
\bottomrule
\end{tabular}
\end{table}

\subsection{Q3. Real-World Effectiveness}
We measure anomaly scores using \method, \method-R and \sedanspot\ on the \emph{TwitterSecurity} dataset. Figure \ref{fig:security} plots anomaly scores vs. day (during the four months of $2014$). To visualise, we aggregate edges occurring in each day by taking the max anomalousness score per day, for a total of $90$ days. Anomalies correspond to major world news such as Mpeketoni attack (Event $6$) or Soma Mine explosion (Event $1$). \method\ and \method-R show similar trends whereas \sedanspot\ misses some anomalous events (Events $2,7$), and outputs many high scores unrelated to any true events. This is also reflected in the low accuracy and precision of \sedanspot\ in Figure \ref{fig:AUC}. The anomalies detected by \method\ and \method-R coincide with major events in the \emph{TwitterSecurity} timeline as follows:

\begin{footnotesize}
\begin{enumerate}
\item 13-05-2014. Turkey Mine Accident, Hundreds Dead
\item 24-05-2014. Raid.
\item 30-05-2014. Attack/Ambush.\\
03-06-14. Suicide bombing
\item 09-06-14. Suicide/Truck bombings.
\item 10-06-2014. Iraqi Militants Seized Large Regions.\\
11-06-2014. Kidnapping
\item 15-06-14. Attack
\item 26-06-14. Suicide Bombing/Shootout/Raid
\item 03-07-14. Israel Conflicts with Hamas in Gaza.
\item 18-07-14. Airplane with 298 Onboard was Shot Down over Ukraine.
\item 30-07-14. Ebola Virus Outbreak.

\end{enumerate}
\end{footnotesize}
This shows the effectiveness of \method\ and \method-R for catching real-world anomalies.

\textbf{Microcluster anomalies:} Figure \ref{fig:micro} corresponds to Event $7$ in the \emph{TwitterSecurity} dataset. All single edges are equivalent to 444 edges and double edges are equivalent to 888 edges between the nodes. This suddenly arriving (within 1 day) group of suspiciously similar edges is an example of a microcluster anomaly which \method-R detects, but \sedanspot\ misses.

\begin{figure}[!htb]
        \center{\includegraphics[width=\columnwidth]
        {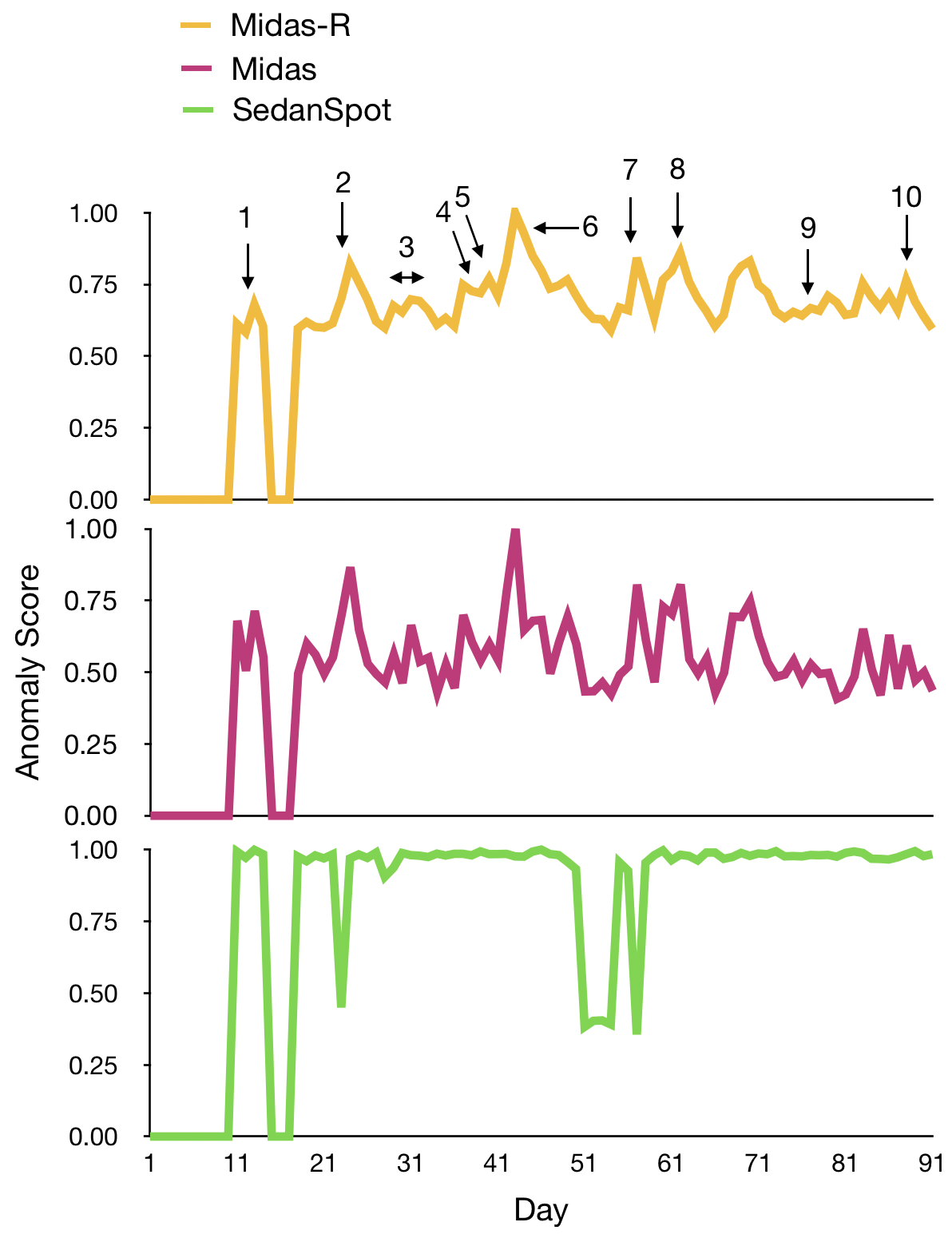}}
        \caption{\label{fig:security} Anomalies detected by \method\ and \method-R correspond to major security-related events in \emph{TwitterSecurity}.}
\end{figure}

\begin{figure}[!htb]
        \center{\includegraphics[width=\columnwidth]
        {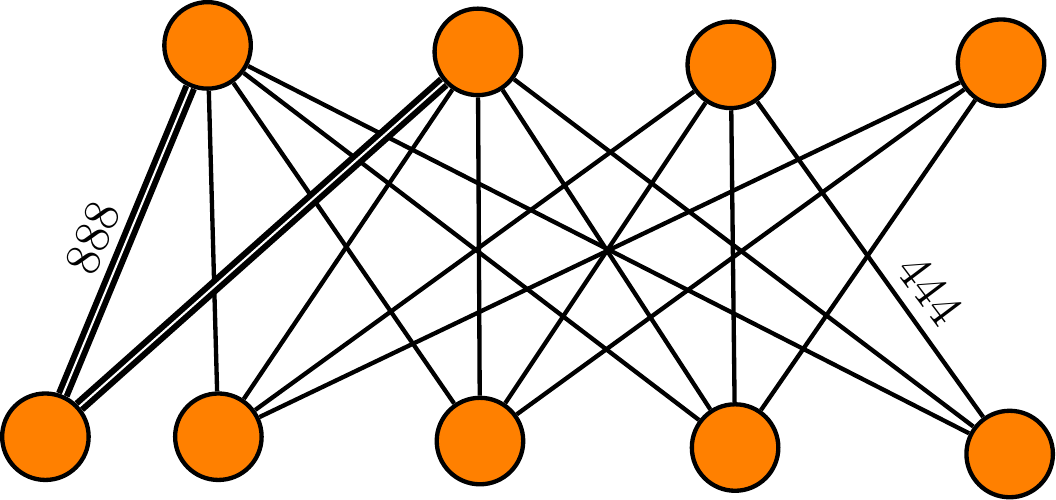}}
        \caption{\label{fig:micro} Microcluster Anomaly in \emph{TwitterSecurity}}
\end{figure}

\section{Conclusion}
In this paper, we proposed \method\ and \method-R for microcluster based detection of anomalies in edge streams. Future work could consider more general types of data, including heterogeneous graphs or tensors. 
Our contributions are as follows:
\begin{enumerate}
\item Streaming Microcluster Detection: We propose a novel streaming approach for detecting microcluster anomalies, requiring constant time and memory. 
\item Theoretical Guarantees: In Theorem \ref{thm:bound}, we show guarantees on the false positive probability of \method.
\item Effectiveness: Our experimental results show that \method\ outperforms baseline approaches by $42\%$-$48\%$ accuracy (in terms of AUC), and processes the data $162-644$ times faster than baseline approaches.
\end{enumerate}

\section{Acknowledgments}
This work was supported in part by NUS ODPRT Grant R-252-000-A81-133.

\bibliographystyle{aaai}
\bibliography{paper}

\end{document}